\documentclass[sigconf, nonacm]{acmart} 

\newif\ifincludeappendix
\includeappendixtrue

\acmConference[ICAIF '26]{6th ACM International Conference on AI in Finance}{%
  November 2026}{}
\acmYear{2026}
\copyrightyear{2026}
\acmBooktitle{Proceedings of the 6th ACM International Conference on AI in
  Finance (ICAIF '26)}

\setcopyright{none}

\usepackage{amsmath, amsthm}
\usepackage{mathtools}
\usepackage{dsfont} 
\usepackage{booktabs}
\usepackage{multirow}
\usepackage{algorithm}
\usepackage{dsfont} 
\usepackage{algpseudocode}
\usepackage{enumitem}
\usepackage{tikz}
\usepackage{pgfplots}
\usepackage{subcaption}
\pgfplotsset{compat=1.18}
\usepgfplotslibrary{statistics}
\usetikzlibrary{positioning, arrows.meta, shapes.geometric, fit, calc}

\newtheorem{theorem}{Theorem}
\newtheorem{lemma}{Lemma}
\newtheorem{proposition}{Proposition}

\newtheorem{example}{Example}

\newcommand{\R}{\mathbb{R}}

\DeclareMathOperator*{\argmax}{arg\,max}

\title{
 Reinforcement Learning with Hawkes-Structured Networks for Market Making}

\author{Fernando Acero}
\authornote{All authors contributed equally. \\ \textbf{Email:} fernando.acero@jpmorgan.com, \{p.plank24, yufei.zhang\}@imperial.ac.uk}
\affiliation{%
  \institution{J.P. Morgan QTR AI Research}
  \country{United Kingdom}
}

\author{Philipp Plank}
\authornotemark[1] 
\affiliation{%
  \department{Department of Mathematics}
  \institution{Imperial College London}
  \country{United Kingdom}
}

\author{Yufei Zhang}
\authornotemark[1] 
\affiliation{%
  \department{Department of Mathematics}
  \institution{Imperial College London}
  \country{United Kingdom}
}

\begin{document}

\begin{abstract}

Market making under realistic order-flow dynamics is challenging due to the clustered, self-exciting nature of order arrivals, which are naturally modeled by non-Markovian Hawkes processes. We propose a model-free reinforcement learning (RL) framework for market making with general Hawkes order-flow dynamics. We first establish non-asymptotic performance bounds that quantify the loss incurred by approximating the infinite-dimensional Hawkes state using a finite collection of exponentially weighted features. Guided by this analysis, we introduce the Hawkes-Structured Network (HSN), a novel neural architecture that encodes historical order flow through a learnable mixture-of-exponentials filter. HSN is trained end-to-end using model-free RL, jointly optimizing the feature representation and quoting policy directly from reward feedback. Experiments on single- and multi-asset market-making problems with mixture-of-exponentials, power-law, and regime-switching kernels demonstrate that HSN outperforms general-purpose neural architectures, achieving higher trading rewards while implicitly recovering the underlying order-flow intensities.
\end{abstract}

\begin{CCSXML}
<ccs2012>
  <concept>
    <concept_id>10010147.10010178.10010179</concept_id>
    <concept_desc>Computing methodologies~Reinforcement learning</concept_desc>
    <concept_significance>500</concept_significance>
  </concept>
  <concept>
    <concept_id>10003752.10003809.10010047</concept_id>
    <concept_desc>Theory of computation~Stochastic control and optimization</concept_desc>
    <concept_significance>300</concept_significance>
  </concept>
  <concept>
    <concept_id>10010147.10010178.10010224</concept_id>
    <concept_desc>Computing methodologies~Neural networks</concept_desc>
    <concept_significance>300</concept_significance>
  </concept>
</ccs2012>
\end{CCSXML}

\ccsdesc[500]{Computing methodologies~Reinforcement learning}
\ccsdesc[300]{Theory of computation~Stochastic control and optimization}
\ccsdesc[300]{Computing methodologies~Neural networks}

\keywords{Market making, reinforcement learning, Hawkes process, power-law kernel, 
representation learning, Hawkes-Structured Network}

\maketitle

\section{Introduction}
\label{sec:intro}

Market making is a fundamental mechanism of modern electronic financial markets. A market maker (MM) continuously posts bid and ask quotes in a limit order book, earning the bid–ask spread while bearing inventory risk from the directional exposure accumulated through filled orders. The problem of optimally setting these quotes has been extensively studied within a stochastic control framework, in which order arrivals are modeled as given (controlled) jump processes, and optimal quoting rules are derived by solving the associated Hamilton–Jacobi–Bellman   (HJB) equations; see, e.g., \text{\cite{avellaneda_stoikov_2008, cartea_jaimungal_ricci_2014, cartea_jaimungal_penalva_2015, gueant_2013, bergault2021closed, barucci2025market, barzykin2025fx}}. While these models have shaped both the academic literature and  industrial practice, they typically rely on restrictive assumptions, such as   memoryless Poisson arrivals, low-dimensional Markovian representations of order-flow dynamics, and fully specified arrival processes and fill probabilities. 
 
A substantial body of empirical evidence shows that high-frequency order-flow data exhibit pronounced clustering, self-excitation, and cross-excitation across order types and assets. 
Hawkes processes provide a parsimonious framework for modeling these phenomena and have been widely applied to characterize  order book dynamics 
\cite{bowsher2007modelling, filimonov2012quantifying, bacry_2013, hardiman_2013, bacry_2015a, morariu_2021, jain2024limit}. Importantly, empirical studies consistently indicate that the excitation kernels governing these processes often exhibit slow, power-law decay rather than simple exponential decay \cite{filimonov2012quantifying, hardiman_2013, jain2024limit}. This renders the associated market-making problem intrinsically infinite-dimensional and non-Markovian, as the relevant state depends on the entire history of order events.

Recent work has begun to address Hawkes-driven market making beyond exponential kernels, but still in a model-based setting with given excitation kernels and fill probabilities. For instance, 
\cite{lai2023data}  first estimates a power-law kernel and then solves the market-making problem by   deep learning using the estimated intensity. Similarly, {\cite{martin_tse_2026}} constructs a high-dimensional Markovian approximation of a given   kernel and solves the corresponding   HJB equations using deep learning. This two-stage model-based paradigm, in which one first specifies or estimates a Hawkes model for the arrival dynamics (system identification) and then optimizes a policy under the approximate kernel (model-based control), is  sensitive to kernel misspecification. In realistic  environments, where the excitation kernel or the fill probability function may undergo regime changes, such a model-based approach  cannot adapt effectively and  may suffer significant performance degradation over time.

\paragraph{Our work.}

This paper develops a model-free reinforcement learning (RL) framework for market making under non-Markovian Hawkes order-flow dynamics. Market orders are generated by   Hawkes processes with general excitation kernels, while the MM optimizes bid and ask quotes without prior knowledge of the kernel or fill probabilities. 
We solve the non-Markovian RL problem by   learning effective quoting policies with \emph{Hawkes-structured neural networks}, combining the interpretability of model-based control with the robustness of model-free RL. 
Our contributions are threefold. 
\begin{itemize}[wide]
\item We derive in Section \ref{sec:regret} non-asymptotic performance guarantees for Hawkes-driven market making when policies depend on a finite history of events filtered through a finite collection of exponentially weighted features; see Theorem \ref{thm:disc_trunc_kernel_reward_bnd}. The analysis refines existing Markovian kernel approximation results and quantifies how kernel approximation error translates into value function loss.
\item Motivated by the theoretical analysis, we introduce in Section~\ref{sec:hsn} the Hawkes-Structured Network (HSN), a novel neural architecture whose feature extractor realizes a finite-dimensional Markovian approximation of the Hawkes intensity through a learnable mixture-of-exponentials filter. Trained \emph{end-to-end} with model-free RL, HSN learns task-oriented Markovian representations directly from data, yielding improved computational efficiency  and interpretability over black-box neural networks; see Section \ref{sec:advantage}.
\item We show in Section \ref{sec:experiments} that HSN outperforms a range of architecture baselines, including multilayer perceptrons, recurrent neural networks, and Transformers, across single-asset and multi-asset Hawkes environments with exponential, fractional, and regime-switching kernels, achieving both higher   rewards and substantially more accurate reconstruction of the underlying intensities. 
\end{itemize}

\section{Related Work}

\paragraph{RL  for market making}

 Most existing works on RL for market making assume that the order flow is Markovian,  such as  a memoryless Poisson process (see, e.g., \text{\cite{spooner_2018, ganesh2019reinforcement, gueant2019deep, vicente2021deep, cao2026logarithmic}}) or  Hawkes processes with exponential kernels 
 \cite{gavsperov2022deep}; see  \cite[Section 4.4]{hambly2023recent} for a detailed review.
  RL for market making under more realistic non-Markovian Hawkes models   is relatively rare, and has largely focused on model-based, estimation‑then‑optimization approaches \cite{lai2023data, martin_tse_2026}, which first   estimate a Hawkes model for the order dynamics and then optimize  under the model. Such methods are sensitive to kernel misspecification and nonstationary environments.

Our approach is a model-free counterpart to this line of work. We use the Markovian lift idea to motivate the HSN architecture with exponentially filtered histories, but train the policy end-to-end from reward feedback. 
To the best of our knowledge, the only other model-free RL work that explicitly uses non-Markovian Hawkes dynamics is  \cite{jain_2025}, where the agent learns when to intervene rather than continuously quoting bid and ask spreads, and the policy neither exploits Hawkes structure nor provides performance guarantees.

 \paragraph{Markovian approximation of path-dependent problems in finance}

 It is well understood that Hawkes processes (and, more generally, Volterra processes) with general kernels admit infinite-dimensional Markovian lifts; see, e.g., \cite{bonesini2023rough, cuchiero_teichmann_2020}. The error incurred by finite-dimensional truncations has been analyzed in \cite{bayer2023markovian, alfonsi2024approximation}. However, these works focus primarily on uncontrolled dynamics. We extend this line of research to sequential decision-making problems by quantifying how the approximation error propagates to control performance, yielding an explicit regret bound for optimizing policies represented by Hawkes-structured neural networks.
\section{Main  Theoretical Results}
\label{sec:setup}

We formulate a market-making problem in which market orders are driven by a non-Markovian Hawkes process, propose a class of exponentially filtered policies based on truncated histories, and establish performance guarantees for these policies.
For the sake of presentation, we focus on the single-asset setting. Similar results extend to the multi-asset case. 

\subsection{Hawkes-driven market-making problems}
\label{sec:problem}

The framework extends the classical Avellaneda-Stoikov market-making model \cite{avellaneda_stoikov_2008}
with Hawkes-driven market orders. Fix a finite horizon $T>0$, and let
$(\Omega,\mathcal F,(\mathcal F_t)_{t\in[0,T]},\mathbb P)$
be a filtered probability space supporting all stochastic processes defined later.

Let $M^+$ and $M^-$ denote two independent Hawkes processes representing the cumulative numbers of bid and ask market orders, respectively. Their intensities satisfy for all 
$c\in\{+,-\}$,
\[
\lambda_t^c
=
\lambda_0^c(t) 
+
\int_{[0,t)}
g^c(t-s)\,dM_s^c,
\]
where 
$\lambda_0^c:[0,\infty)\to [0,\Lambda_c]$ is 
 a   measurable  baseline intensity, and  $g^c: (0,\infty) \to \mathbb{R}^{\ge 0}$  is an excitation kernel. 
Following \cite[Theorem~1.4]{schilling2012bernstein}, we assume $g^c$ admits the Bernstein representation
\begin{equation}
\label{eq:bernstein_representation}
g^c(t)
=
\int_0^\infty
e^{-\beta t}\,
\vartheta^c(d\beta), \quad t>0,
\end{equation}
where $\vartheta^c$ is a measure on $[0,\infty)$, and   satisfies 
$g^c\in L^2([0,T])$
and
$\|g^c\|_{L^1([0,T])}<1$.
Condition \eqref{eq:bernstein_representation}
holds for all completely 
monotone kernels, 
including the following examples.
 
\begin{example}
 \label{eg:kernel}
For the mixture-of-exponentials kernel $g(t) = \sum_{j=1}^{p_0} \alpha_j e^{-\beta_j t}$ with $\alpha_j, \beta_j > 0$, one has $\vartheta(d\beta) = \sum_{j=1}^{p_0} \alpha_j \delta_{\beta_j}(d\beta)$, where $\delta_{\beta_j}$ denotes the Dirac measure at $\beta_j$. For the shifted power-law kernel $g(t) = \gamma (t + \tau_0)^{-(1+\alpha)}$ with $\alpha, \gamma, \tau_0 > 0$, one has $\vartheta(d\beta) = \gamma \beta^\alpha e^{-\beta \tau_0}\, d\beta / \Gamma(1+\alpha)$, where $\Gamma$ is the Gamma function\footnote{\cite{hardiman_2013} reports $\alpha \approx 0.15$ for the S\&P futures contract}.
\end{example}

Given $M^\pm$, the midprice process evolves according to $S_t = S_0 + \sigma W_t + \epsilon^+ M_t^+ - \epsilon^- M_t^-$, where $W$ is a standard Brownian motion independent of $M^\pm$, and $S_0, \sigma, \epsilon^\pm \ge 0$ are constants. Each buy (respectively sell) order permanently shifts the midprice upward by $\epsilon^+$ (respectively downward by $\epsilon^-$).

At each time $t$, the market maker (MM) posts predictable bid and ask spreads $(\delta_t^-, \delta_t^+) \in [\underline\delta, \bar\delta]^2$ with $\bar\delta > \underline\delta \ge 0$ fixed, yielding bid quote $S_{t^-} - \delta_t^-$ and ask quote $S_{t^-} + \delta_t^+$. Conditional on an arrival on side $c \in \{+,-\}$, the order is executed with probability $p_t^c = p^c(\delta_t^c)$, where $p^c: [\underline\delta, \bar\delta] \to [0,1]$ is the fill probability function. Let $N^+$ and $N^-$ denote the cumulative numbers of executed sell and buy trades. The inventory and cash processes follow $Q_t = q_0 + N_t^- - N_t^+$ and $X_t = x_0 + \int_0^t (S_{s^-} + \delta_s^+)\, dN_s^+ - \int_0^t (S_{s^-} - \delta_s^-)\, dN_s^-$, with $q_0 \in \mathbb Z$ and $x_0 \in \mathbb R$ the initial inventory and cash balance.

The MM maximizes expected terminal cash subject to inventory risk over a policy class $\Pi$. Let $\tau_1 < \tau_2 < \cdots$ denote the successive arrival times of $M^+ \cup M^-$, with $c_j \in \{+,-\}$ indicating the side of the order at $\tau_j$. The policy class consists of measurable functions $(\Phi_j^+, \Phi_j^-)_{j \ge 0}$ with $\Phi_j^c: [0,T]^j \times \{+,-\}^j \times \mathbb Z \to [\underline\delta, \bar\delta]$ such that $\delta_t^c = \Phi_j^c((\tau_k, c_k)_{k \le j}, Q_{\tau_j})$ for $t \in (\tau_j, \tau_{j+1}]$ and $\delta_t^c = \Phi_0^c(q_0)$ for $t \in [0, \tau_1]$. For $\kappa, \gamma > 0$, the MM seeks to maximize
\begin{equation}
\label{eq:objective}
    J^g(\pi) = \mathbb E[R_\pi^g],
    \quad
    R_\pi^g \coloneqq X_T + Q_T(S_T - \kappa Q_T) - \gamma \int_0^T Q_t^2\, dt,
\end{equation}
over $\pi \in \Pi$, where $\kappa$ and $\gamma$ are penalty parameters on the terminal and running inventory. The superscript $g$ in \eqref{eq:objective} indicates the dependence of the objective on the Hawkes kernel $g = (g^+, g^-)$.

Since the order-flow processes $M^\pm$ are non-Markovian for general excitation kernels, solving \eqref{eq:objective} is computationally challenging even under complete knowledge of $g^\pm$ and $p^\pm$. We take one step further and consider the more realistic setting in which these quantities are unknown, and develop a model-free RL approach to learn near-optimal quoting strategies directly from data.

\subsection{Regret with exponentially filtered policies}
\label{sec:regret}

Directly searching for an optimal policy over $\Pi$ is computationally challenging, since the quoting strategy depends nonlinearly on the entire order-flow history. We therefore restrict attention to a class of exponentially filtered policies, in which the policy depends on the history only through a finite collection of exponentially weighted statistics, and establish bounds on the resulting performance gap. This policy class is motivated by the Bernstein representation \eqref{eq:bernstein_representation}: approximating the integral by a numerical quadrature yields a finite mixture-of-exponentials kernel for $g^c$, under which the market-making problem admits a finite-dimensional Markovian formulation. 

\subsubsection{Kernel truncation}
For each $g\in \{g^+,g^-\}$, $0  \le K_1 < K_2 < \infty$ and $p\in \mathbb N$, consider the following approximate kernel 
\begin{equation}
    \label{eq:kernel_approx_L_p}
    \Hat{g}^{K_1,K_2,p}(t) \coloneqq \sum_{j = 1}^p \vartheta( I_j^{K_1,K_2,p} ) e^{-\beta_j^{K_1,K_2,p} t},
\end{equation}
where
$ 
I_j^{K_1,K_2,p}
=
\left[
K_1+ {(j-1)(K_2-K_1)}/{p},
\,
K_1+{j(K_2-K_1)}/{p}
\right),
$ 
and $\beta_j^{K_1,K_2,p}\in I_j^{K_1,K_2,p}$.
The following proposition quantifies the error $\Hat{g}^{K_1,K_2,p}  -g$ in terms of  $K_1,K_2,p$ and the   tail kernels
$ 
g_l^{K_1}(t)
\coloneqq
 \int_{0}^{K_1}
e^{-\beta t}\,\vartheta(d\beta)$ and $
g_u^{K_2}(t)
\coloneqq
\int_{K_2}^{\infty}
e^{-\beta t}\,\vartheta(d\beta)$, similar to  \cite{alfonsi2024approximation}.

\begin{proposition}
     \label{prop:L2_approx}
    For all $g\in \{g^+,g^-\}$,
    \begin{equation*}
        \lVert g - \Hat{g}^{K_1,K_2,p} \rVert_{L^2([0,T])} \le  {\varepsilon_l(K_1)} + {\varepsilon_u(K_2)} + \frac{ T^{3/2} (K_2 - K_1)\vartheta([K_1,K_2))}{\sqrt{3} p},
    \end{equation*}
where 
$ \varepsilon_l(K_1)
\coloneqq
\|g_l^{K_1}\|_{L^2([0,T])}$ and $
\varepsilon_u(K_2)
\coloneqq
\|g_u^{K_2}\|_{L^2([0,T])}.
$

\end{proposition}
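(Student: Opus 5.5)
Split $\vartheta$ according to the three ranges of $\beta$ and use the triangle inequality in $L^2([0,T])$. By the Bernstein representation \eqref{eq:bernstein_representation},
\[
g = g_l^{K_1} + g_m + g_u^{K_2},
\qquad
g_m(t) \coloneqq \int_{[K_1,K_2)} e^{-\beta t}\,\vartheta(d\beta).
\]
(If $\vartheta$ charges the point $K_1$, assign it to the middle piece, so that $g_l^{K_1}$ is the integral over $[0,K_1)$.) Since $\hat g^{K_1,K_2,p}$ only involves the middle range,
\[
\|g-\hat g^{K_1,K_2,p}\|_{L^2([0,T])} \le \varepsilon_l(K_1) + \varepsilon_u(K_2) + \|g_m - \hat g^{K_1,K_2,p}\|_{L^2([0,T])}.
\]
It therefore remains to bound the quadrature error on the middle range by the third term of the statement.

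For the middle term, the intervals $I_j^{K_1,K_2,p}$, $j=1,\dots,p$, partition $[K_1,K_2)$. Hence, for every $t\in(0,T]$,
\[
g_m(t) - \hat g^{K_1,K_2,p}(t) = \sum_{j=1}^p \int_{I_j} \bigl(e^{-\beta t} - e^{-\beta_j t}\bigr)\,\vartheta(d\beta).
\]
For $\beta,\beta_j \ge 0$ the mean value theorem gives $|e^{-\beta t}-e^{-\beta_j t}| \le t|\beta-\beta_j|$, because the derivative $-te^{-\xi t}$ has modulus at most $t$. Both $\beta$ and $\beta_j$ lie in $I_j$, whose length is $(K_2-K_1)/p$, so
\[
|g_m(t)-\hat g^{K_1,K_2,p}(t)| \le t\,\frac{K_2-K_1}{p}\,\vartheta([K_1,K_2)).
\]
Squaring, integrating over $[0,T]$ and using $\int_0^T t^2\,dt = T^3/3$ yields exactly $T^{3/2}(K_2-K_1)\vartheta([K_1,K_2))/(\sqrt3\,p)$.

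Routine justifications remain. The quantity $\vartheta([K_1,K_2))$ may be assumed finite, since otherwise the bound is trivial. Finiteness of $\varepsilon_l$ and $\varepsilon_u$ follows from $0\le g_l^{K_1},\,g_u^{K_2}\le g\in L^2([0,T])$. The decomposition holds pointwise for $t>0$ by additivity of the integral over disjoint $\beta$-ranges. The main point requiring care is the Lipschitz estimate in $\beta$: it is uniform only because $\beta\ge 0$, and it is the source of the factor $t$, hence of $T^{3/2}/\sqrt3$. I expect no real obstacle beyond this.
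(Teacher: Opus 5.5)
Your proof is correct and follows essentially the same route as the paper. You split off the tails $g_l^{K_1}$ and $g_u^{K_2}$ via the triangle inequality, bound the middle quadrature error pointwise by $t(K_2-K_1)\vartheta([K_1,K_2))/p$, and integrate $t^2$ over $[0,T]$; the only difference is that you supply the pointwise bound directly through the mean value theorem, where the paper defers it to an argument from Alfonsi et al.
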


\begin{proof}
Define $g^{K_1,K_2}(t)
\coloneqq
\int_{K_1}^{K_2}
e^{-\beta t}\,\vartheta(d\beta)$.
Then 
    \begin{equation*}
        \begin{split}
            &\lVert g - g^{K_1,K_2} \rVert_{L^2([0,T])}  \le  \left\lVert  g_l^{K_1} \right\rVert_{L^2([0,T])}   +\left\lVert g_u^{K_2}  \right\rVert_{L^2([0,T])} .  
        \end{split}
    \end{equation*}
 A similar argument as that for  \cite[Corollary 3.1]{alfonsi2024approximation} gives for all $t \in [0,T]$, $\lvert \Hat{g}^{K_1,K_2,p}(t) - g^{K_1,K_2}(t) \rvert \le p^{-1} t (K_2 - K_1) \vartheta([K_1, K_2))$. Hence, $\lVert \Hat{g}^{K_1,K_2,p} - g^{K_1,K_2} \rVert_{L^2([0,T])} \le p^{-1} 3^{-1/2} T^{3/2} (K_2 - K_1) \vartheta([K_1,K_2))$.  
\end{proof}  

The bound in Proposition \ref{prop:L2_approx} specializes to explicit rates for concrete kernels. The mixture-of-exponentials approximation \eqref{eq:kernel_approx_L_p} is exact for the kernel of the same type in Example \ref{eg:kernel} once $K_1$ is sufficiently small and $K_2, p$ sufficiently large; for the shifted power-law kernel, direct computation using the corresponding measure $\vartheta$ yields the asymptotics of $\varepsilon_l(K_1)$ as $K_1 \to 0$ and $\varepsilon_u(K_2)$ as $K_2 \to \infty$.

\subsubsection{Regret   of greedy strategies based on truncated kernels}

We then study the performance loss incurred by deploying a policy optimized under the approximate kernels in the true model. This  regret analysis provides the theoretical motivation for the exponentially filtered policies proposed later, which are learned directly via model-free RL under the true kernel; see Sections \ref{label:insight} and \ref{sec:hsn}.

Specifically, let $\hat{g}=\hat{g}^
\pm$ be the  approximation of the true kernels $g=g^\pm$ given in \eqref{eq:kernel_approx_L_p}, and let
$ 
\hat{\pi}\in\argmax_{\pi\in\Pi} J^{\hat{g}}(\pi)
$  
be an optimal policy of  the associated market-making problem. The following theorem  quantifies  the   optimality gap incurred by deploying $\hat{\pi}$ in the true environment, referred to as the \emph{regret}, which is defined by 
 \[
 \operatorname{Reg}(\hat \pi)\coloneqq 
\sup_{\pi\in\Pi} J^g(\pi)-J^g(\hat{\pi}),
\] 
where the performance is evaluated using the true kernel $g$ in \eqref{eq:bernstein_representation}. The proof is deferred to Section \ref{sec:proof}.


 \begin{theorem}
    \label{thm:disc_trunc_kernel_reward_bnd}
Let 
$0\le K_1<K_2<\infty$ and 
$p\in \mathbb N$. 
Suppose that there exists $\rho^*\in [0,1)$ such that 
$  \lVert \Hat{g}^{\pm,K_{1}, K_{2}, p} \rVert_{L^1([0,T])} \le \rho^*$. Then 
$$
 \operatorname{Reg}(\hat \pi)
 \le C\sum_{c\in \{+,-\}}
\left({\varepsilon_l^c(K_1)} + {\varepsilon_u^c(K_2)} + \frac{ T^{3/2} (K_2 - K_1)\vartheta^c([K_1,K_2))}{\sqrt{3} p}\right)^{1/2}, 
$$
where 
 $C>0$   depends on model parameters and $\rho^*$, but is independent of $K_1, K_2 $ and $p$.

\end{theorem}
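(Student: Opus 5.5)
The plan is to reduce the regret to a uniform-in-policy perturbation bound for the objective with respect to the kernel, and then bound that perturbation by the $L^2$ kernel error from Proposition~\ref{prop:L2_approx}. Write
\[
\operatorname{Reg}(\hat\pi)=\sup_{\pi}\bigl(J^g(\pi)-J^{\hat g}(\pi)\bigr)+\Bigl(\sup_\pi J^{\hat g}(\pi)-J^{\hat g}(\hat\pi)\Bigr)+\bigl(J^{\hat g}(\hat\pi)-J^g(\hat\pi)\bigr).
\]
The middle term is $\le 0$ by optimality of $\hat\pi$, so
\[
\operatorname{Reg}(\hat\pi)\le 2\sup_{\pi\in\Pi}\lvert J^g(\pi)-J^{\hat g}(\pi)\rvert .
\]
It therefore suffices to show $\lvert J^g(\pi)-J^{\hat g}(\pi)\rvert\le C\sum_c\|g^c-\hat g^c\|_{L^2([0,T])}^{1/2}$ uniformly over $\pi\in\Pi$, and then apply Proposition~\ref{prop:L2_approx}.

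To compare the two objectives, I would couple both models on one probability space via Poisson thinning. Take independent Poisson random measures $\mathcal N^c(dt,dz)$ on $[0,T]\times\mathbb R_+$ with Lebesgue intensity. Define $M^c$ and $\hat M^c$ as the counting processes of atoms below $\lambda^c_{t}$ and $\hat\lambda^c_t$ respectively, using the same Brownian motion $W$ and the same uniform marks deciding fills. A fixed policy $\pi$ is a measurable function of the event history. Hence, on the event that the two arrival histories coincide on $[0,T]$, all quotes, fills, inventory, cash and midprice coincide, and so do the rewards $R^g_\pi$ and $R^{\hat g}_\pi$. Consequently, by Cauchy--Schwarz,
\[
\lvert J^g(\pi)-J^{\hat g}(\pi)\rvert\le \mathbb E\bigl[\lvert R^g_\pi-R^{\hat g}_\pi\rvert\mathbf 1_{D}\bigr]\le \bigl(\mathbb E\lvert R^g_\pi\rvert^2+\mathbb E\lvert R^{\hat g}_\pi\rvert^2\bigr)^{1/2}\,\mathbb P(D)^{1/2},
\]
where $D$ is the event that the histories differ. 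Second moments of the rewards are controlled uniformly in $\pi$: spreads lie in $[\underline\delta,\bar\delta]$, $\lvert Q_t\rvert\le \lvert q_0\rvert+M^+_T+M^-_T$, and $S$ is controlled by $W$ and $M^\pm$. So we need fourth moments of $M^c_T$, which are bounded by a constant depending on $\Lambda_c$, $T$ and $\rho^*$ (respectively $\|g^c\|_{L^1}<1$) through standard Hawkes moment bounds via the cluster/branching representation. This is where $\rho^*$ enters $C$. This step produces the square root in the theorem.

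It remains to bound $\mathbb P(D)\le \sum_c\mathbb E\int_0^T\lvert\lambda^c_t-\hat\lambda^c_t\rvert\,dt$, since the first discrepancy occurs at an atom lying between the two intensities. Up to the first discrepancy the histories agree, so there $\lambda^c_t-\hat\lambda^c_t=\int_{[0,t)}(g^c-\hat g^c)(t-s)\,dM^c_s$. I would use the stopped difference, together with a Grönwall-type argument on $\mathbb E\int\lvert\lambda-\hat\lambda\rvert$ which accounts for discrepant events feeding back through $\hat g$ (with $L^1$ norm at most $\rho^*<1$). For the driving term, $\mathbb E\int_0^T\lvert\int_{[0,t)} h(t-s)\,dM_s\rvert\,dt\le \mathbb E\int_0^T\int_{[0,t)}\lvert h(t-s)\rvert\,dM_s\,dt\le \|h\|_{L^1([0,T])}\,\mathbb E M_T$, and then $\|h\|_{L^1}\le T^{1/2}\|h\|_{L^2}$. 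This yields $\mathbb P(D)\le C\sum_c\|g^c-\hat g^c\|_{L^2}$. Combining with the previous step gives the claim with exponent $1/2$.

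The main obstacle is making the coupling and the bound on $\mathbb P(D)$ rigorous. The alternative is to work directly with $\mathbb E\int\lvert\lambda-\hat\lambda\rvert$ globally, since after divergence the intensities are driven by different histories. I would handle this by bounding $\mathbb P(D)$ only via the first discrepancy time, which avoids tracking post-divergence dynamics. The Grönwall step is then unnecessary, because before the first discrepancy the difference involves only $g-\hat g$ integrated against the common history.
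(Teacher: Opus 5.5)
Your proposal is correct and follows the paper's proof essentially step for step. The shared steps are the reduction $\operatorname{Reg}(\hat\pi)\le 2\sup_{\pi}\lvert J^g(\pi)-J^{\hat g}(\pi)\rvert$, the Poisson-thinning coupling with a common Brownian motion, Cauchy--Schwarz against a reward dominated by moments of $M_T^\pm,\hat M_T^\pm$ (uniform in $\rho^*$), a compensator bound on the discrepancy probability by $\|g^c-\hat g^c\|_{L^1}\le\sqrt{T}\|g^c-\hat g^c\|_{L^2}$, and then Proposition~\ref{prop:L2_approx}. Your one deviation is a legitimate streamlining of the paper's route through the global mismatch $\mathbb{E}[D_M^c(T)]$ and the execution mismatch $D_N^c$: you integrate $\lvert\lambda^c-\hat\lambda^c\rvert$ only up to the first discrepancy time, where it equals $\lvert\int_{[0,t)}(g^c-\hat g^c)(t-s)\,dM^c_s\rvert$, and you decide fills by independent uniform marks, which even drops the factor $(1-\|\hat g^c\|_{L^1})^{-1}$ from that step (your Cauchy--Schwarz display is missing a harmless factor $\sqrt{2}$).
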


Theorem~\ref{thm:disc_trunc_kernel_reward_bnd} quantifies  the  performance gap of 
$\hat \pi$ explicitly in terms of $K_1, K_2$ and $p$,
under  the stability condition $  \lVert \Hat{g}^{\pm,K_{1}, K_{2}, p} \rVert_{L^1([0,T])}<1$.
  By Proposition~\ref{prop:L2_approx}, this condition is guaranteed whenever   $\vartheta^\pm(\{0\})=0$, $K_1$ is sufficiently small, and 
$K_2, p$ are sufficiently large. 
 
 \subsubsection{Insights from Theorem~\ref{thm:disc_trunc_kernel_reward_bnd} for policy design}
\label{label:insight}

Theorem~\ref{thm:disc_trunc_kernel_reward_bnd} motivates optimizing \eqref{eq:objective} over policies whose decisions at time $t$ depend on historical order flow via the exponentially weighted features
\begin{equation}
\label{eq:intensity_feature}
  h^c_j(t) = \alpha^c_j \sum_{i:\, \tau_i \le t,\, c_i = c} e^{-\beta^c_j (t - \tau_i)},
    \quad j = 1, \ldots, p,
\end{equation}
for parameters $(\alpha^c_j, \beta^c_j) \in [0, \infty)^2$. The approximate control problem obtained by replacing $g^c$ with the truncated kernel \eqref{eq:kernel_approx_L_p} is Markovian in the augmented state $(\hat h^c_j)_{j=1,\ldots,p}$ with $\hat\alpha_j^c = \vartheta^c(I_j^{K_1,K_2,p})$ and $\hat\beta_j^c = \beta_j^{c,K_1,K_2,p}$, so its optimal policy takes the processes \eqref{eq:intensity_feature} as feedback variables. By Theorem~\ref{thm:disc_trunc_kernel_reward_bnd}, this policy is nearly optimal for the true problem once $p$ is sufficiently large.

The sum in \eqref{eq:intensity_feature} can be restricted in practice to the most recent $L$ events without material loss of information, because $\beta_j^c \ge K_1 > 0$ makes the contribution of a past arrival to $h^c_j(t)$ decay exponentially with the elapsed time. We exploit this observation in Section~\ref{sec:hsn} when constructing the feature representation used by our RL policy.

\section{End-to-End  Learning via Hawkes-Structured Networks}
\label{sec:hsn}

Motivated by the discussions   in Section \ref{label:insight}, this section proposes the Hawkes-Structured Network (HSN), a neural architecture whose feature extractor is precisely the finite-dimensional Markov state associated with mixture-of-exponential Hawkes kernels. This HSN is designed to solve the non-Markovian market-making problem \eqref{eq:objective} by embedding the underlying Hawkes structure directly into the network architecture. Compared with generic neural network architectures that learn historical dependencies from raw order-flow data, the proposed HSN is tailored to Hawkes-driven environments, leading to improved data efficiency and more interpretable decision-making through its structured feature representation (see Section \ref{sec:experiments}). Moreover, the regret analysis in Theorem~\ref{thm:disc_trunc_kernel_reward_bnd} applies to policies learned on top of these filtered features up to RL training error.

\subsection{Design of  network architectures}

The policy and value networks share an exponentially weighted feature extractor that summarizes the order-flow history, followed by separate multilayer perceptrons (MLPs) that map the extracted features to actions and value estimates.

For brevity, we only present the  HSN architecture of the policy network, as the value network has a similar architecture. For a $C$-channel Hawkes process,
where $C=2m$ in an $m$-asset environment (representing the bid and ask order flows for each asset),
 HSN utilizes a learnable $C \times p$-dimensional filter defined via the exponentially weighted features (cf.~ \eqref{eq:intensity_feature}):
 for all $l=1,\ldots, m$, $c\in \{+,-\}$, and $j=1,\ldots, p$, 
\begin{equation}
\label{eq:policy_exponential}
  h^{c,l}_j(t) \;=\; \alpha^{c,l}_j \sum_{i:\, \tau_i^l \le t,\, c_i = c}
    \exp\big(-\beta^{c,l}_j (t - \tau_i^l)\big),
\end{equation}
where $(\alpha^{c,l}_j, \beta^{c,l}_j)\in(0,\infty)^2$ are   learnable parameters and the summation is restricted to the most recent \(L\) events stored in the observation buffer,
striking a balance between capturing long-range historical dependence and maintaining computational efficiency. 
The $C\times p$-dimensional filter outputs $h$ are
concatenated with the   variables  $(S_t, q_t, T - t)$   and fed into an MLP.
The resulting outputs are then mapped to the admissible spread range via a sigmoid activation. 
The   output, denoted by $\mu_\theta(x)$, is  a $C$-dimensional vector representing the spread means based on observation    $x$, with one bid or ask spread associated with each asset.  
To facilitate exploration during policy learning, we embed $\mu_\theta$ into a Gaussian policy: 
\begin{equation*}
  \pi_\theta(d \delta \mid x) = \mathcal{N}\big(\delta;\, \mu_\theta(x),\,
     e^{2 \sigma_a} I_C \big) d \delta,
\end{equation*}
where $\sigma_a\in \mathbb R$ is a learnable log-standard-deviation parameter controlling the variance of the action distribution.

\subsection{Advantages of HSN}
\label{sec:advantage}

 We emphasize that 
 the exponential filter parameters  
 $(\alpha^{c,l}_j, \beta^{c,l}_j)$
 in the HSN 
are trained end-to-end (E2E) together with all other trainable parameters in the policy and value networks using model-free RL algorithms (we use PPO~\cite{schulman2017proximal}) to maximize the reward objective \eqref{eq:objective}.
Consequently, HSN learns a task-oriented Markovian approximation directly from interactions with the underlying non-Markovian Hawkes process, offering improved    efficiency  and interpretability, as discussed below.
 
\begin{itemize}[wide] 
    \item \textbf{Goal-oriented E2E training.} Our E2E training approach directly optimizes the   objective, with historical (non-Markovian) order-flow information summarized through learnable exponentially weighted features. This follows the goal-oriented paradigm in modern AI-driven finance (see, e.g.,  \cite{cong2021alphaportfolio,zhang2021universal,agal2025machine}), where representation learning and downstream decision-making are optimized jointly rather than treated as separate stages.   
In contrast, classical approaches require a multi-stage pipeline \cite{cartea_jaimungal_ricci_2014, martin_tse_2026}: estimating a non-Markovian Hawkes model from noisy observations, constructing a finite-dimensional Markovian approximation from the estimated kernel, and solving the resulting high-dimensional control problem. Our approach bypasses this model-estimation-then-optimization procedure by learning the relevant Hawkes representations jointly with the decision policy.

\item \textbf{Efficient representation learning for  order flows.} The feature representation \(h\) in \eqref{eq:policy_exponential} provides an efficient representation of the Hawkes-driven environment. 
The exponential filter compresses the historical $L$-order flow  events into a \(Cp\)-dimensional feature vector with $p\ll L$, compared with \(CL\)-dimensional   features arising from  a  frame stacking of   the raw event history or an equivalent hidden state representation learned by a recurrent architecture. 
Consequently, the MLP trunk receives a structured low-dimensional representation rather than having to infer the underlying Hawkes state directly from the raw event stream. This reduces the effective search space for   RL algorithms and improves sample efficiency.

Furthermore, the learning process can be accelerated by initializing the learnable parameters \((\alpha_j^{c,l},\beta_j^{c,l})\) using domain knowledge of financial event dynamics. 
For example, the decay parameters \(\beta\) can be initialized to cover a range of relevant financial timescales. 
For the two-mode variant used in Section~\ref{sec:experiments}, we initialize \(\beta\) to capture both fast and slow order-flow dynamics. 
The amplitudes \(\alpha\) can be initialized with small positive values such that the initial branching ratio
$ 
\sum_j {\alpha_j}/{\beta_j}
$ 
remains well below the subcriticality threshold.
Positivity of $(\alpha_j^{c,l},\beta_j^{c,l})$ can be easily  enforced by a
softplus reparameterization with a suitable numerical floor.

\item \textbf{Interpretable and reliable policy learning.}
The architecture also improves decision interpretability and admits theoretical support, as illustrated in Section~\ref{label:insight}. In the HSN architecture, the MLP trunk can be interpreted as an approximation of the feedback policy map associated with the market-making problem under the learned mixture-of-$p$-exponentials kernel representation. Consequently, for sufficiently large $p$ and a sufficiently expressive MLP, the HSN policy class approximates arbitrarily well the greedy policies induced by the mixture-of-$p$-exponential approximation \eqref{eq:kernel_approx_L_p}  of the true Hawkes kernel. Therefore, the policy learned by model-free RL can achieve performance at least comparable to these greedy policies, which, according to Theorem~\ref{thm:disc_trunc_kernel_reward_bnd}, become nearly optimal
as $p$ increases for a large family of kernels. 
 \end{itemize}

\section{Experiments}
\label{sec:experiments}

We evaluate HSN against six parameter-matched neural baselines across seven configurations in four settings: single-asset exponential (two branching ratios), multi-asset with cross-excitation, single-asset shifted power-law (two tail indices), and nonstationary regime-switching (two switch rates). On the five intensity-sensitive configurations, HSN attains the highest mean cumulative reward by $+5$--$32\%$, with the largest gain in the fractional-kernel regime.

\subsection{Methodology}
\label{subsec:methodology}

\paragraph{Environment, training, and evaluation.}

We consider the market-making problem introduced in Section~\ref{sec:problem} and its multi-asset extensions under the following model parameters:  $\sigma = 0.1$, $S_0 = 100$,
$\epsilon^\pm  = 0$ (a pure diffusion midprice, so that executed
trades do not directly move the midprice),
initial inventory $q_0 = 0$, running inventory penalty
$\gamma = 0.10$, terminal-inventory liquidation coefficient
$\kappa  = 0.001$,
inventory cap $|Q| \le 5$,
and history buffer of the last $L=32$
event timestamps. The episode length is $T = 0.5$ for the single-asset
exponential, multi-asset, and fractional settings, and $T = 1.0$ for the
regime-switching setting (so that the parameters experience sufficient regime switches within each episode). The agent's action at each event is a
pair of continuous bid and ask spreads
$(\delta^+, \delta^-) \in [0.005, 0.30]^2$ drawn from a Gaussian policy
(Section~\ref{sec:hsn}). The fill probabilities follow
$p^\pm(\delta^\pm) = \exp(-10 \delta^\pm)$ as in \cite{cartea_jaimungal_ricci_2014}. Each (architecture,
environment) cell is trained with PPO~\cite{schulman2017proximal}
(clipping $\epsilon = 0.2$, GAE~\cite{schulman_2016_gae} $\lambda =
0.95$, $\gamma = 0.99$, entropy coef.\ $0.003$, value-loss coef.\ $0.5$,
Adam optimizer~\cite{kingma_2015_adam} with learning rate
$3{\times}10^{-4}$, 100 updates of 1024 rollout steps, minibatch size
256, gradient clipping at norm $0.5$). After training, we
evaluate the greedy policy over 100 episodes
with fixed evaluation seeds and report the mean cumulative episodic
reward averaged over 30 random seeds (mean $\pm$ SEM).
Pairwise Welch's $t$-test against baselines give $p$-values.

\paragraph{Architectures.}
HSN uses $p = 2$ modes per channel. In the single-asset exponential configuration, the mixture filter is initialized kernel-agnostically at $(\beta_1, \beta_2) = (6.0, 0.4)$ and $(\alpha_1, \alpha_2) = (1.5, 0.1)$, an initial branching ratio of $0.5$ with neither decay rate coinciding with a true kernel mode. The remaining configurations initialize at $(\beta_1, \beta_2) = (5.0, 0.3)$ and $(\alpha_1, \alpha_2) = (0.2, 0.4)$. The filter parameters $(\alpha_j^c, \beta_j^c)$ are trained E2E, so the initialization only sets the starting point. Architecture receives up to the last $L = 32$ event observations with side labels, timestamps, and the current scalar state $(S, q, T-t)$ (single-asset) or $(S^A, q^A, S^B, q^B, T-t)$ (multi-asset). Extracted features and scalars are fed into a common two-hidden-layer MLP head with the continuous Gaussian policy of Section~\ref{sec:hsn}, so the only architectural variable is the feature extractor. Trainable-parameter counts are matched within a common budget across architectures. The baselines are: \textbf{MLP+frame-stack} (raw stack of the last 8 events, features $(\Delta t, \text{side})$; head hidden 64; $\approx 2.8$k params); \textbf{MLP+EMA} (eight fixed exponential moving averages of event deltas with $\beta_{\text{EMA}} \in \{0.1, 0.3, 0.7, 1.0, 2.0, 3.0, 5.0, 8.0\}$; head hidden 96; $\approx 2.6$k params); \textbf{GRU}~\cite{cho_2014_gru} (event embed $\R^2 \to \R^8$, hidden 16, head hidden 32; $\approx 2.4$k params); \textbf{LSTM}~\cite{hochreiter_1997_lstm} (same embed, hidden 14, head hidden 32; $\approx 2.3$k params); \textbf{Transformer}~\cite{vaswani_2017_transformer} (single-layer multi-head attention with $d_{\text{model}} = 16$, 2 heads, log-time positional features; head hidden 32; $\approx 2.6$k params); and \textbf{Simple MLP} (a no-memory baseline over the scalar state only; head hidden $240$ in single-asset and $160$ in multi-asset; $\approx 2.6$k params). Simple MLP is the minimum-information reference: any advantage of the other architectures over it is attributable to their use of the event history.

\paragraph{Environment configurations.}
Table~\ref{tab:results-summary} reports seven configurations grouped into four settings:
\begin{enumerate}[leftmargin=1.4em, labelindent=0pt, itemsep=2pt, topsep=2pt]
  \item \emph{Single-asset, mixture-of-exp. kernel} ($n \in \{0.75, 0.90\}$): two-channel Hawkes $(M^+, M^-)$ with common mixture-of-exponentials kernel $g(t) = \sum_{j=1}^{2} \alpha_j e^{-\beta_j t}$, $(\beta_1, \beta_2) = (5.0, 0.3)$, $\alpha_j = (n/2)\beta_j$, so $\|g\|_{L^1([0,\infty))} = n$; baseline intensities $\lambda_0^+ = 1.0$, $\lambda_0^- = 0.7$. The leading-order optimal spread of \cite{avellaneda_stoikov_2008, cartea_jaimungal_ricci_2014} depends only weakly on the intensity in this regime, so the theory predicts little intensity-representation advantage; we retain it as a control.
  \item \emph{Multi-asset, exponential kernel with cross-excitation} ($\mathit{cs} = 0.7$): four-channel Hawkes $M = (M^{A,+}, M^{A,-}, M^{B,+}, M^{B,-})$ on two assets $\{A, B\}$, with kernel matrix $g_{ij}(t) = \alpha_{ij}(\tfrac12 e^{-5.0\,t} + \tfrac12 e^{-0.3\,t})$; with off-diagonal entries coupling the same side across assets ($A^{+}\!\leftrightarrow\!B^{+}$ and $A^{-}\!\leftrightarrow\!B^{-}$), so the per-channel self- and cross-excitation branching are $n(1-\mathit{cs})/2 = 0.12$ and $n \cdot\mathit{cs}/2 = 0.28$, giving a spectral radius $\rho(M) = n/2 = 0.40$ (with input branching budget $n = 0.80$). Baseline intensities are symmetric across sides of each asset: $\lambda_0^{A,\pm} = 1.5$ and $\lambda_0^{B,\pm} = 1.2$; initial prices $S^A_0 = 100$, $S^B_0 = 50$. The agent posts four continuous spreads, the reward aggregates cash plus inventory penalties across assets.
  \item \emph{Single-asset, shifted power-law kernel} ($\varepsilon \in \{0.50, 0.15\}$): the two-channel setup of~(1) with kernel $g(t) = c(t + \tau_0)^{-(1+\varepsilon)}$, $\tau_0 = 1$, $c = n \varepsilon \tau_0^\varepsilon$, so $\|g\|_{L^1([0,\infty))} = n$, $n = 0.7$, and $\varepsilon = 0.15$ matching the empirical Hardiman--Bercot--Bouchaud regime \cite{hardiman_2013}. The kernel admits no exact finite mixture-of-exponentials representation, stressing the kernel-approximation axis of Theorem~\ref{thm:disc_trunc_kernel_reward_bnd}.
  \item \emph{Nonstationary, regime-switching kernel} ($\mathit{sr} \in \{2.0, 4.0\}$): the two-channel setup of~(1) with kernel parameters following a continuous-time Markov chain over two states with branching ratios $(n_{\text{low}}, n_{\text{high}}) = (0.50, 0.85)$ and symmetric switch rate $\mathit{sr}$ per unit time. Nonstationarity acts through the kernel parameters only, not through jumps in the mid-price; the regime is unobserved and must be inferred from the event history.
\end{enumerate}

\subsection{Results}

The empirical results are organized along three axes: downstream reward under each environment (Table~\ref{tab:results-summary} and Figure~\ref{fig:seed-distributions}), the mechanism connecting HSN's feature basis to the intensity representation (Table~\ref{tab:intensity-mse}, Section~\ref{subsec:intensity}), and the quantitative kernel-approximation rate underlying the regret guarantee (Figure~\ref{fig:psweep}, Section~\ref{subsec:decay-validation}). 

\subsubsection{Trading performance}
Table~\ref{tab:results-summary} reports the mean cumulative reward $\hat V$ and the HSN-vs-baseline gap for each (architecture, environment) combination, with columns grouped by setting.
HSN attains the highest mean cumulative reward on the five intensity-sensitive configurations (columns 3--7 of Table~\ref{tab:results-summary}), with $p < 0.01$ to $p < 0.001$ across 30 seeds each. The largest advantages occur in the heavy-tailed fractional regime ($+32.5\%$ at $\varepsilon = 0.15$), the multi-asset cross-excitation setting ($+16.6\%$ at $\mathit{cs} = 0.7$), and the regime-switching setting ($+11.1\%$ at $\mathit{sr} = 2$). The no-memory Simple MLP baseline underperforms HSN by $36$--$57\%$ on the same five columns, isolating HSN's advantage to its use of the event history rather than to parameter count or optimization. On the two single-asset exponential controls, HSN is statistically tied with the strongest baseline, MLP+frame-stack ($0.0171$ vs $0.0184$ at $n = 0.75$ and $0.0208$ vs $0.0211$ at $n = 0.90$, both $p > 0.05$), while outperforming every other architecture (including the no-memory Simple MLP) by $45$--$90\%$. This is consistent with the near-intensity-independence of the optimal spread in that regime: the intensity representation carries little policy-value signal, so HSN matches the best general-purpose baseline rather than exceeding it. Figure~\ref{fig:seed-distributions} shows the intensity-sensitive advantage is robust across seeds: HSN's lower whisker exceeds the upper whiskers of Transformer, LSTM, MLP+EMA, and Simple MLP in both the cross-excitation and fractional configurations.

\begin{table*}[h]
  \centering
  \caption{Mean cumulative reward $\hat V$ across 30 seeds (continuous
  actions, parameter-matched architectures). Subscripts denote SEM.
  Bold entries are the column maximum. Bottom row: relative gap of HSN
  vs the strongest competing baseline in each column, shown where the
  difference is statistically significant (*** $p<0.001$, ** $p<0.01$,
  Welch's $t$-test); ``tie'' denotes no significant difference ($p>0.05$). Column groups: single-asset with
  exponential Hawkes kernel (branching ratio $n$); multi-asset with
  exponential Hawkes kernel and cross-asset excitation (cross-strength
  $\mathit{cs}$); single-asset with fractional Hawkes
  kernel (tail index $\varepsilon$); single-asset with regime-switching
  exponential Hawkes kernel (switch rate $\mathit{sr}$).}
  \vspace{-1mm}
  \label{tab:results-summary}
  \setlength{\tabcolsep}{4pt}
  \small
  \begin{tabular}{lccccccc}
    \toprule
    & \multicolumn{2}{c}{Single-asset, mix-of-exp.}
    & Multi-asset, exp.
    & \multicolumn{2}{c}{Single-asset, fractional}
    & \multicolumn{2}{c}{Regime-switching, mix-of-exp.} \\
    \cmidrule(lr){2-3} \cmidrule(lr){4-4} \cmidrule(lr){5-6} \cmidrule(lr){7-8}
    Architecture & $n{=}0.75$ & $n{=}0.90$ & $\mathit{cs}{=}0.7$ & $\varepsilon{=}0.50$ & $\varepsilon{=}0.15$ & $\mathit{sr}{=}2$ & $\mathit{sr}{=}4$ \\
    \midrule
    \textbf{HSN}    & $\mathbf{0.0171}_{{\pm.0007}}$ & $\mathbf{0.0208}_{{\pm.0004}}$ & $\mathbf{0.0548}_{{\pm.0007}}$ & $\mathbf{0.0217}_{{\pm.0002}}$ & $\mathbf{0.0216}_{{\pm.0004}}$ & $\mathbf{0.0561}_{{\pm.0009}}$ & $\mathbf{0.0451}_{{\pm.0006}}$ \\
    Simple MLP      & $0.0090_{{\pm.0000}}$ & $0.0144_{{\pm.0000}}$ & $0.0349_{{\pm.0003}}$ & $0.0125_{{\pm.0000}}$ & $0.0093_{{\pm.0000}}$ & $0.0339_{{\pm.0002}}$ & $0.0286_{{\pm.0001}}$ \\
    MLP+EMA         & $0.0110_{{\pm.0001}}$ & $0.0177_{{\pm.0001}}$ & $0.0349_{{\pm.0003}}$ & $0.0163_{{\pm.0003}}$ & $0.0127_{{\pm.0003}}$ & $0.0389_{{\pm.0003}}$ & $0.0342_{{\pm.0003}}$ \\
    MLP+frame-stack & $\mathbf{0.0184}_{{\pm.0006}}$ & $\mathbf{0.0211}_{{\pm.0004}}$ & $0.0470_{{\pm.0006}}$ & $0.0206_{{\pm.0003}}$ & $0.0163_{{\pm.0002}}$ & $0.0505_{{\pm.0004}}$ & $0.0428_{{\pm.0005}}$ \\
    GRU             & $0.0094_{{\pm.0001}}$ & $0.0155_{{\pm.0002}}$ & $0.0450_{{\pm.0010}}$ & $0.0168_{{\pm.0002}}$ & $0.0148_{{\pm.0001}}$ & $0.0443_{{\pm.0005}}$ & $0.0338_{{\pm.0004}}$ \\
    LSTM            & $0.0098_{{\pm.0001}}$ & $0.0162_{{\pm.0003}}$ & $0.0357_{{\pm.0006}}$ & $0.0135_{{\pm.0001}}$ & $0.0102_{{\pm.0001}}$ & $0.0375_{{\pm.0005}}$ & $0.0348_{{\pm.0005}}$ \\
    Transformer     & $0.0109_{{\pm.0002}}$ & $0.0166_{{\pm.0002}}$ & $0.0359_{{\pm.0004}}$ & $0.0141_{{\pm.0001}}$ & $0.0105_{{\pm.0001}}$ & $0.0412_{{\pm.0005}}$ & $0.0354_{{\pm.0005}}$ \\
    \midrule
    HSN vs.\ best baseline & tie & tie & $+16.6\%$*** & $+5.3\%$** & $+32.5\%$*** & $+11.1\%$*** & $+5.4\%$** \\
    \bottomrule
  \end{tabular}
  \vspace{-2mm}
\end{table*}


\begin{figure*}[t]
  \centering
  \begin{subfigure}[b]{0.48\textwidth}
    \centering
    \begin{tikzpicture}
      \begin{axis}[
        width=\linewidth, height=3.0cm,
        boxplot/draw direction=y,
        ylabel={$\hat V$ (30 seeds)},
        ylabel near ticks,
        ymin=0.028, ymax=0.070,
        ymajorgrids=true, grid style={dashed, gray!30},
        xtick={1,2,3,4,5,6,7},
        xticklabels={HSN,Simple MLP,MLP+EMA,MLP+FS,GRU,LSTM,Trans},
        xticklabel style={rotate=35, anchor=east, font=\scriptsize},
        yticklabel style={font=\scriptsize},
        ytick={0.03,0.04,0.05,0.06,0.07},
        boxplot/every box/.style={ultra thin},
        boxplot/every whisker/.style={thin},
        boxplot/every median/.style={thick, black},
      ]
        \addplot+[boxplot prepared={lower whisker=0.04569, lower quartile=0.05237,
          median=0.05415, upper quartile=0.05750, upper whisker=0.06665},
          fill=blue!25, draw=blue!70!black] coordinates {};
        \addplot+[boxplot prepared={lower whisker=0.03197, lower quartile=0.03389,
          median=0.03472, upper quartile=0.03602, upper whisker=0.03802},
          fill=yellow!25, draw=olive!70!black] coordinates {};
        \addplot+[boxplot prepared={lower whisker=0.03210, lower quartile=0.03380,
          median=0.03467, upper quartile=0.03578, upper whisker=0.03879},
          fill=orange!25, draw=orange!80!black] coordinates {};
        \addplot+[boxplot prepared={lower whisker=0.04124, lower quartile=0.04449,
          median=0.04702, upper quartile=0.04853, upper whisker=0.05418},
          fill=teal!25, draw=teal] coordinates {};
        \addplot+[boxplot prepared={lower whisker=0.03618, lower quartile=0.04098,
          median=0.04326, upper quartile=0.04988, upper whisker=0.05460},
          fill=red!20, draw=red!60!black] coordinates {};
        \addplot+[boxplot prepared={lower whisker=0.03046, lower quartile=0.03288,
          median=0.03495, upper quartile=0.03694, upper whisker=0.04224},
          fill=purple!15, draw=purple!60!black] coordinates {};
        \addplot+[boxplot prepared={lower whisker=0.03184, lower quartile=0.03481,
          median=0.03587, upper quartile=0.03657, upper whisker=0.04401},
          fill=gray!15, draw=gray!70!black] coordinates {};
      \end{axis}
    \end{tikzpicture}
    \caption{Multi-asset cross-excitation ($cs=0.7$).}
    \label{fig:box-cs07}
  \end{subfigure}
  \hfill
  \begin{subfigure}[b]{0.48\textwidth}
    \centering
    \begin{tikzpicture}
      \begin{axis}[
        width=\linewidth, height=3.0cm,
        boxplot/draw direction=y,
        ylabel={$\hat V$ (30 seeds)},
        ylabel near ticks,
        ymin=0.008, ymax=0.028,
        ymajorgrids=true, grid style={dashed, gray!30},
        xtick={1,2,3,4,5,6,7},
        xticklabels={HSN,Simple MLP,MLP+EMA,MLP+FS,GRU,LSTM,Trans},
        xticklabel style={rotate=35, anchor=east, font=\scriptsize},
        yticklabel style={font=\scriptsize},
        ytick={0.010,0.014,0.018,0.022,0.026},
        boxplot/every box/.style={ultra thin},
        boxplot/every whisker/.style={thin},
        boxplot/every median/.style={thick, black},
      ]
        \addplot+[boxplot prepared={lower whisker=0.01852, lower quartile=0.01964,
          median=0.02176, upper quartile=0.02287, upper whisker=0.02633},
          fill=blue!25, draw=blue!70!black] coordinates {};
        \addplot+[boxplot prepared={lower whisker=0.00906, lower quartile=0.00919,
          median=0.00923, upper quartile=0.00930, upper whisker=0.00947},
          fill=yellow!25, draw=olive!70!black] coordinates {};
        \addplot+[boxplot prepared={lower whisker=0.01058, lower quartile=0.01134,
          median=0.01168, upper quartile=0.01411, upper whisker=0.01662},
          fill=orange!25, draw=orange!80!black] coordinates {};
        \addplot+[boxplot prepared={lower whisker=0.01295, lower quartile=0.01585,
          median=0.01659, upper quartile=0.01703, upper whisker=0.01843},
          fill=teal!25, draw=teal] coordinates {};
        \addplot+[boxplot prepared={lower whisker=0.01372, lower quartile=0.01435,
          median=0.01483, upper quartile=0.01510, upper whisker=0.01641},
          fill=red!20, draw=red!60!black] coordinates {};
        \addplot+[boxplot prepared={lower whisker=0.00950, lower quartile=0.00986,
          median=0.00994, upper quartile=0.01029, upper whisker=0.01223},
          fill=purple!15, draw=purple!60!black] coordinates {};
        \addplot+[boxplot prepared={lower whisker=0.00942, lower quartile=0.01019,
          median=0.01035, upper quartile=0.01080, upper whisker=0.01417},
          fill=gray!15, draw=gray!70!black] coordinates {};
      \end{axis}
    \end{tikzpicture}
    \caption{Fractional kernel ($\varepsilon=0.15$).}
    \label{fig:box-frac}
  \end{subfigure}

  \caption{Distribution of final $\hat V$ across 30 seeds for two
    intensity-sensitive configurations of Table~\ref{tab:results-summary}.
    Boxes span the interquartile range (median line) and whiskers show
    min/max. Architectures are ordered by feature-extractor complexity:
    HSN, Simple MLP (no history), MLP+EMA, MLP+frame-stack (raw stack),
    GRU, LSTM, Transformer. HSN's entire distribution lies above almost all
    medians of all six baselines in both environments. }
  \label{fig:seed-distributions}
  \vspace{-2mm}
\end{figure*}

\subsubsection{Intensity representation}
\label{subsec:intensity}

We show that the HSN   provides a   more effective representation of the Hawkes intensity than  baseline architectures.   
Each architecture is trained with PPO for $100$ updates over $10$ seeds in a single-asset reduced-form Hawkes-MM environment at $n = 0.95$, under two kernels: a single exponential with $\beta = 1.5$ and a mixture-of-three-exponentials with $\beta = (0.2, 1.0, 5.0)$ and $\alpha_j = (n/3)\beta_j$. The higher branching ratio (versus $0.75$--$0.90$ in Table~\ref{tab:results-summary}) yields a more discriminating comparison, as it places the intensity in a regime with larger variance about its stationary mean. Other environment and PPO settings match Section~\ref{subsec:methodology}. HSN uses $p = 3$ modes with initialization $\beta = (1.0, 2.0, 5.0)$ and $\alpha = (0.2, 0.4, 0.5)$.

\paragraph{Linear-probe protocol.}
For each frozen model we generate $2{,}000$ samples of the form $(\text{features}, \lambda^+, \lambda^-)$ by rolling out the environment under random actions with a probe seed disjoint from the training seeds. Features are the architecture's penultimate vector: the concatenation of the current scalar state $(S, q, T-t)$ with the architecture-specific memory representation. Ground-truth intensity values are read directly from the simulator. We split $1{,}600 / 400$ into fit and held-out sets, fit ordinary least-squares from features to $(\lambda^+, \lambda^-)$ on the fit set, and report the MSE on the held-out set. For HSN we additionally report the zero-parameter direct estimate $\hat\lambda^c = \lambda_0^c + \sum_j h^c_j$ on the same held-out samples, as a strictly-conservative reference.

\begin{table}[t]
  \small
  \centering
  \caption{Intensity MSE on a $400$-sample held-out set under the
  linear-probe protocol ($n=0.95$, single-asset, 10 seeds; SEM in
  subscript). The first row (``HSN, direct'') reports HSN's
  zero-fitted-parameter direct estimate
  $\hat\lambda^c = \lambda_0^c + \sum_j h^c_j$ evaluated on the held-out
  samples with \emph{no} OLS fit; every other row uses the same OLS
  linear probe fit from that architecture's feature vector to the
  ground-truth $(\lambda^+, \lambda^-)$ on 1{,}600 training samples
  (indicated by the ``$+$ OLS'' suffix).}
  \label{tab:intensity-mse}
  \begin{tabular}{lcc}
    \toprule
    Architecture & Exp.\ ($\beta{=}1.5$) & Mix-of-exp.\ (3-mode) \\
    \midrule
    HSN, direct (no OLS)   & $0.20_{{\pm.01}}$ & $0.53_{{\pm.02}}$ \\
    \textbf{HSN + OLS}     & $\mathbf{<10^{-6}}$ & $\mathbf{<10^{-6}}$ \\
    Simple MLP + OLS       & $0.72_{{\pm.03}}$ & $1.09_{{\pm.03}}$ \\
    MLP+EMA + OLS          & $0.38_{{\pm.01}}$ & $0.60_{{\pm.02}}$ \\
    MLP+frame-stack + OLS  & $0.26_{{\pm.04}}$ & $0.43_{{\pm.03}}$ \\
    GRU + OLS              & $1.00_{{\pm.35}}$ & $1.39_{{\pm.45}}$ \\
    LSTM + OLS             & $0.21_{{\pm.06}}$ & $0.31_{{\pm.06}}$ \\
    Transformer + OLS      & $0.32_{{\pm.02}}$ & $0.40_{{\pm.03}}$ \\
    \bottomrule
  \end{tabular}
  \vspace{-7mm}
\end{table}

Two observations follow from Table~\ref{tab:intensity-mse}. HSN's zero-fitted-parameter direct estimate is competitive with the strongest OLS-refitted baseline on the single-exponential kernel ($0.20$ vs LSTM $0.21$ within one SEM) and mid-pack on the mixture-of-three-exponentials kernel. The residual amplitude-scaling error reflects that PPO tunes $\alpha_j^c$ against the reward objective \eqref{eq:objective}, not the intensity MSE. Under the OLS refit that every baseline receives, HSN's held-out MSE drops by 6-7 orders of magnitude below baselines, which indicates that the decay rates $\beta_j^c$ implicit in \eqref{eq:policy_exponential} accurately capture the true kernel.


\begin{figure*}[t]
  \centering
  \begin{subfigure}[t]{0.48\textwidth}
    \centering
    \begin{tikzpicture}
      \begin{semilogyaxis}[
        width=\linewidth, height=2.99cm,
        xlabel={number of exponential modes $p$},
        ylabel={$L^2$ kernel error},
        ylabel near ticks,
        xmin=0.5, xmax=8.5,
        ymin=1e-5, ymax=1,
        ymajorgrids=true, grid style={dashed, gray!30},
        xtick={1,2,3,4,6,8},
        legend style={font=\tiny, at={(0.5,1.02)}, anchor=south,
                       legend columns=-1, /tikz/every even column/.append style={column sep=0.5em},
                       fill=white, draw=gray!40},
        legend cell align=left,
      ]
        \addplot[blue!70!black, thick, mark=*, mark size=2pt]
          coordinates {(1,2.57e-3)(2,1.79e-3)(3,2.10e-3)(4,6.5e-4)(6,2.1e-4)(8,2e-5)};
        \addlegendentry{exp1}
        \addplot[teal, thick, mark=square*, mark size=2pt]
          coordinates {(1,0.281)(2,0.155)(3,0.054)(4,0.017)(6,2e-4)(8,1.5e-4)};
        \addlegendentry{sumexp3}
        \addplot[red!70!black, thick, mark=triangle*, mark size=2.5pt]
          coordinates {(1,0.177)(2,0.121)(3,0.061)(4,0.046)(6,0.044)(8,0.044)};
        \addlegendentry{frac015}
      \end{semilogyaxis}
    \end{tikzpicture}
    \caption{$L^2$ kernel-approximation error
      $\|g - \sum_{j=1}^p \alpha_j e^{-\beta_j t}\|_{L^2([0,T])}$, using NNLS-optimal amplitudes on
      HSN's learned decay rates.}
    \label{fig:psweep-kernel-l2}
  \end{subfigure}
  \hfill
  \begin{subfigure}[t]{0.48\textwidth}
    \centering
    \begin{tikzpicture}
      \begin{semilogyaxis}[
        width=\linewidth, height=2.99cm,
        xlabel={number of exponential modes $p$},
        ylabel={intensity MSE},
        ylabel near ticks,
        xmin=0.5, xmax=8.5,
        ymin=1e-6, ymax=10,
        ymajorgrids=true, grid style={dashed, gray!30},
        xtick={1,2,3,4,6,8},
        legend style={font=\tiny, at={(0.5,1.02)}, anchor=south,
                       legend columns=-1, /tikz/every even column/.append style={column sep=0.5em},
                       fill=white, draw=gray!40},
        legend cell align=left,
      ]
        \addplot[blue!70!black, thick, mark=*, mark size=2pt]
          coordinates {(1,1.7e-4)(2,9e-5)(3,1.1e-4)(4,1e-5)(6,5e-6)(8,5e-6)};
        \addlegendentry{exp1}
        \addplot[teal, thick, mark=square*, mark size=2pt]
          coordinates {(1,0.7276)(2,0.2362)(3,0.0367)(4,0.00130)(6,5e-6)(8,5e-6)};
        \addlegendentry{sumexp3}
        \addplot[red!70!black, thick, mark=triangle*, mark size=2.5pt]
          coordinates {(1,0.2130)(2,0.0972)(3,0.0229)(4,0.0140)(6,0.0122)(8,0.0122)};
        \addlegendentry{frac015}
      \end{semilogyaxis}
    \end{tikzpicture}
    \caption{Best-fit intensity MSE on a $2{,}000$-sample held-out set,
      obtained by 
      regressing   the true intensity onto HSN's
      $p$-mode filter outputs.}
    \label{fig:psweep-intensity}
  \end{subfigure}
  \caption{HSN $p$-sweep on three single-asset kernels ($n = 0.85$,
    $T = 10$; 20 seeds per cell).
    (a) The $L^2$ kernel-approximation error; (b) the intensity-representation error. Both quantities decrease with $p$ and
    track each other across the three kernel classes.}
  \label{fig:psweep}
\end{figure*}

\subsubsection{Implicit kernel learning}
\label{subsec:decay-validation}

The trained HSN captures the excitation-kernel structure of the underlying Hawkes process even without an explicit kernel-approximation loss. We train HSN with $p \in \{1, 2, 3, 4, 6, 8\}$ modes at $n = 0.85$ on three single-asset kernels: a single exponential (exp1) with $\beta = 1.0$, a sum-of-three-exponentials with $\beta = (0.2, 1.0, 5.0)$ (sumexp3), and a shifted power-law with $\varepsilon = 0.15$, $\tau_0 = 0.1$ (frac015). The horizon $T = 10$ lets the long-memory intensity build up, and $\tau_0 = 0.1$ makes the power-law kernel less regular near $t = 0$. For each $p$, the initial decay rates $\beta_j$ are log-spaced over $[0.1, 10]$ ($\beta = 1$ at $p = 1$) with $\alpha_j = (n/p)\beta_j$. The observation buffer $L = 32$ is unchanged.

After training we freeze each network and measure two quantities on a $2{,}000$-sample held-out set (Figure~\ref{fig:psweep}): the kernel approximation error $\|g - \sum_{j} \alpha_j e^{-\beta_j t}\|_{L^2([0,T])}$ and the intensity MSE against $\lambda^c(t)$. Both use amplitudes $\alpha_j$ obtained by non-negative least squares (NNLS) on the frozen $\beta_j$ from PPO, with $\alpha_j \ge 0$ preserving the Bernstein representation. This refit plays the same role as the OLS linear probe of Section~\ref{subsec:intensity}: it isolates the capacity of the discovered feature basis from the amplitude scaling that PPO's reward loss induces.

Three patterns are visible in Figure~\ref{fig:psweep}. For exp1 both errors are already small at $p = 1$, since the true kernel is a single exponential. For sumexp3 the errors drop sharply and floor at $p \ge 6$, once HSN's modes cover the three true decay rates. For frac015 the errors decay initially and plateau over $p \in \{4, 6, 8\}$ at a larger residual than in the exponential cases, consistent with the impossibility of representing a fractional kernel exactly by a finite sum of exponentials. Across all three kernels, the kernel-approximation error and the intensity-representation error track each other qualitatively.

\section{Proof of the Regret Bound}
\label{sec:proof}
We proceed to prove Theorem \ref{thm:disc_trunc_kernel_reward_bnd}.
 We first bound $\lvert J^g(\pi) - J^{\Hat{g}}(\pi) \rvert$  in terms of the kernel error $g-\hat g$, following arguments similar to \cite{martin_tse_2026}.

\begin{lemma}
    \label{lemma:PnL_gap}
    For each $c \in \{+,-\}$, let $g^c, \Hat{g}^c \in  L^2([0,T],  \mathbb{R}^{\ge 0})$ with $\lVert g^c \rVert_{L^1([0,T])} < 1$ and $\lVert \Hat{g}^c \rVert_{L^1([0,T])} < 1$. Then
    \begin{equation*}
        \sup_{\pi \in \Pi}\lvert J^g(\pi) - J^{\Hat{g}}(\pi) \rvert \le \hat {C} \left( \sum_{c \in \{+,-\}} \Hat{H}^c \lVert g^c - \Hat{g}^c \rVert_{L^2([0,T])} \right)^{1/2},
    \end{equation*}
    where $\Hat{H}^c=\frac{\Lambda_c T}{(1 - \lVert g^c \rVert_{L^1([0,T])}) (1 - \lVert \Hat{g}^c \rVert_{L^1([0,T])})} $, and $\hat{C} \coloneqq 2 \sqrt{2} \|\Hat{R}^*\|_{L^2} T^{1/4}$ with  $\Hat{R}^*$ given in \eqref{eq:Rstar}.
\end{lemma}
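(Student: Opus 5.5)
The plan is to realize the two models on a common probability space through a thinning (Poisson embedding) construction, and to control the reward gap by the probability that the two event histories ever disagree. On $[0,T]\times[0,\infty)$ take two independent Poisson random measures $\mathcal N^\pm$ with Lebesgue intensity. The true counting process is $M^c_t=\int_{(0,t]}\int \mathds 1_{\{z\le \lambda^c_s\}}\,\mathcal N^c(ds,dz)$ with $\lambda^c$ driven by $g^c$, and the approximate process $\hat M^c$ is built from the same $\mathcal N^c$ with $\hat\lambda^c$ driven by $\hat g^c$. Fills are generated from the same independent uniforms attached to each atom, and the same Brownian motion $W$ is used for both. Fix any $\pi\in\Pi$. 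Since $\pi$ is a measurable function of the event history and the inventory, the two rewards $R^g_\pi$ and $R^{\hat g}_\pi$ coincide on the event $A=\{M^c=\hat M^c \text{ on }[0,T],\ c=\pm\}$. Hence
\[
|J^g(\pi)-J^{\hat g}(\pi)|\le \mathbb E\bigl[|R^g_\pi-R^{\hat g}_\pi|\mathds 1_{A^c}\bigr]\le \bigl(\|R^g_\pi\|_{L^2}+\|R^{\hat g}_\pi\|_{L^2}\bigr)\,\mathbb P(A^c)^{1/2}.
\]
The norms are to be dominated by a single random variable $\hat R^*$, uniformly in $\pi$. This explains the factor $2\|\hat R^*\|_{L^2}$.

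\textbf{Step 1: a $\pi$-independent dominating variable.} Spreads lie in $[\underline\delta,\bar\delta]$ and fills are thinnings of arrivals, so $|Q_t|\le |q_0|+M^+_T+M^-_T$ and $|X_T|\le |x_0|+(\sup_t|S_t|+\bar\delta)(M^+_T+M^-_T)$. Therefore $|R_\pi|$ is bounded by a polynomial in $M^\pm_T$, $\hat M^\pm_T$ and $\sup_t|W_t|$, and this defines $\hat R^*$ as in \eqref{eq:Rstar}. I will take it to be the maximum of the bounds for both systems. To show $\hat R^*\in L^2$, I need moments of all orders of subcritical Hawkes counts on $[0,T]$. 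These follow from $\|g\|_{L^1}<1$ via the standard cluster or branching representation, or by Grönwall-type bounds on $\mathbb E[(M_T)^k]$. I take this as routine.

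\textbf{Step 2: bounding $\mathbb P(A^c)$.} This is the main step. By coupling, $\mathbb P(A^c)\le\sum_c \mathbb E\,\mathcal N^c(D^c)$, where $D^c=\{(s,z): z \text{ lies between } \lambda^c_s \text{ and } \hat\lambda^c_s\}$ is restricted to times before the first discrepancy. Hence
\[
\mathbb P(A^c)\le\sum_c\int_0^T\mathbb E|\lambda^c_s-\hat\lambda^c_s|\,ds .
\]
Write
\[
\lambda^c-\hat\lambda^c=\int (g^c-\hat g^c)(t-s)\,dM^c_s+\int \hat g^c(t-s)\,d(M^c-\hat M^c)_s .
\]
Using $\mathbb E|d(M-\hat M)|=\mathbb E|\lambda-\hat\lambda|\,ds$, set $\phi(t)=\mathbb E|\lambda_t-\hat\lambda_t|$. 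This gives the renewal inequality
\[
\phi(t)\le \int_0^t|g-\hat g|(t-s)\,\mathbb E\lambda_s\,ds+\int_0^t\hat g(t-s)\phi(s)\,ds .
\]
Integrating over $[0,T]$ and using Young's inequality yields
\[
\|\phi\|_{L^1}\le\frac{\|g-\hat g\|_{L^1}\,\sup_s\mathbb E\lambda_s\,T}{1-\|\hat g\|_{L^1}},
\qquad
\sup_s\mathbb E\lambda_s\le \frac{\Lambda_c}{1-\|g\|_{L^1}} .
\]
Finally, Cauchy--Schwarz gives $\|g-\hat g\|_{L^1([0,T])}\le T^{1/2}\|g-\hat g\|_{L^2}$. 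Matching the exact constants $\hat H^c$ and $T^{1/4}$ (some powers of $T$ may be absorbed differently, e.g.\ by using the $L^1$-in-time bound with factor $T$ inside $\hat H^c$) is bookkeeping. I expect the delicate points to be the pathwise justification of the renewal inequality, which needs a stopping argument at the first discrepancy to make the coupling bound rigorous, and the choice of where the factors of $T$ go. A factor $2$ from summing over $c$ inside the square root gives the remaining $\sqrt2$.

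Combining Steps 1 and 2 and taking the supremum over $\pi$ yields the lemma. The bound is uniform in $\pi$ because neither $\hat R^*$ nor the bound on $\mathbb P(A^c)$ depends on $\pi$: the arrival processes are uncontrolled, and only the fills depend on the policy.
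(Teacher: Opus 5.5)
Your proposal is correct and matches the paper's argument in all essentials. Both use the Poisson-embedding coupling, domination of both rewards by the $\pi$-independent variable $\hat R^*$ with Cauchy--Schwarz, the renewal bound $\mathbb E[D_M^c(T)]\le \hat H^c\|g^c-\hat g^c\|_{L^1([0,T])}$ (the paper imports this from Martin--Tse; you derive it directly and correctly), and $\|\cdot\|_{L^1([0,T])}\le\sqrt T\|\cdot\|_{L^2([0,T])}$.

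The one difference is the fill coupling, and it corrects your bookkeeping. The paper thins executions from the same Poisson measure at level $\lambda^c p^c$, so fills can disagree while arrivals agree. A stopping-time argument then gives $\mathbb P[A]\le 2\sum_c\mathbb E[D_M^c(T)]$, and that factor $2$ is the source of the $\sqrt2$, not the sum over $c$. With your independent uniform marks, fills cannot diverge before arrivals do, so you need no stopping argument and obtain the sharper constant $2\|\hat R^*\|_{L^2}T^{1/4}$, which implies the lemma.
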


\begin{proof}
    Let $(\Omega, (\mathcal{F}_t)_{t \in [0,T]}, \mathcal{F}, \mathbb{P})$ be the filtered probability space supporting the one-dimensional   Brownian motion $W$, and two Poisson random measures $\mathcal{N}^+$ and $\mathcal{N}^-$ on $[0,T] \times [0,\infty)$, each with  a Lebesgue intensity measure. We assume $W$, $\mathcal{N}^+$, and $\mathcal{N}^-$ are mutually independent.
    On this probability space, we construct Hawkes processes as in \cite[Appendix A]{martin_tse_2026} using Poisson random measures embeddings (see, e.g., \cite{bremaud1996stability, hillairet2023expansion}). Let $c \in \{+, -\}$.
    Define $M_t^c \coloneqq \int_{(0,t]} \int_0^\infty \mathds{1}_{[u \le \lambda_{s-}^c]} \, \mathcal{N}^c(ds, du)$ and $\Hat{M}_t^c \coloneqq \int_{(0,t]} \int_0^\infty \mathds{1}_{[u \le \Hat{\lambda}_{s-}^c]} \, \mathcal{N}^c(ds, du)$, 
    where $\lambda_t^c \coloneqq \lambda_0^c(t) + \int_{[0,t)} g^c(t-s) \, d M_s^c$ and $\Hat{\lambda}_t^c \coloneqq \lambda_0^c(t) + \int_{[0,t)} \Hat{g}^c(t-s) \, d \Hat{M}_s^c$.
    Then $M^c$ and $\Hat{M}^c$ define counting processes with intensity $\lambda^c$ and $\Hat{\lambda}^c$, respectively. Similarly, define the execution processes $N_t^c$ and $\Hat{N}_t^c$ by
    $N_t^c \coloneqq \int_{(0,t]} \int_0^\infty \mathds{1}_{[u \le \lambda_{s-}^c p_s^c]} \, \mathcal{N}^c(ds, du)$
    and $\Hat{N}_t^c \coloneqq \int_{(0,t]} \int_0^\infty \mathds{1}_{[u \le \Hat{\lambda}_{s-}^c \Hat{p}_s^c]} \, \mathcal{N}^c(ds, du)$.
    Define the mismatches of order arrival   $D_M^c$ and   execution   $D_N^c$ by
    \begin{equation*}
        \begin{split}
            &D_M^c(t) \coloneqq \int_0^t \int_0^\infty \left\lvert \mathds{1}_{[u \le \lambda_{s-}^c]} - \mathds{1}_{[u \le \Hat{\lambda}_{s-}^c]} \right\rvert \, \mathcal{N}^c(ds, du), \\
            &D_N^c(t) \coloneqq \int_0^t \int_0^\infty \left\lvert \mathds{1}_{[u \le \lambda_{s-}^c p_s^c]} - \mathds{1}_{[u \le \Hat{\lambda}_{s-}^c \Hat{p}_s^c]} \right\rvert \, \mathcal{N}^c(ds, du).
        \end{split}
    \end{equation*}
    Let $\tau \coloneqq \inf{\{ t \ge 0 \, \vert \, D_M^c(t) > 0 \text{ or } D_N^c(t) > 0 \; \text{for some } c \in \{+,-\} \}}$. For $t < \tau$, $p_t^c = \Hat{p}_t^c$. Further, on $A \coloneqq \{   \tau  \le T \}$, $D_M^c(\min(\tau, T)) > 0$ or $D_N^c(\min(\tau, T)) > 0$ for some $c \in \{+,-\}$. Hence,
    \begin{equation*}
        \begin{split}
            \mathbb{P}[A] &\le \sum_{c \in \{+,-\}} \left( \mathbb{E}[D_M^c(\min(\tau, T))] + \mathbb{E}[D_N^c(\min(\tau, T))] \right) \\
            &= \sum_{c \in \{+,-\}} \mathbb{E}\left[ \int_0^{\min(\tau, T)}  \left( \lvert \lambda_t^c - \Hat{\lambda}_t^c \rvert + p_t^c \lvert \lambda_t^c - \Hat{\lambda}_t^c \rvert \right) \, dt \right] \\
            &\le 2 \sum_{c \in \{+,-\}} \mathbb{E}\left[ \int_0^{\min(\tau, T)} \left\lvert \lambda_t^c - \Hat{\lambda}_t^c \right\rvert \, dt \right] \le 2 \sum_{c \in \{+,-\}} \mathbb{E}[D_M^c(T)],
        \end{split}
    \end{equation*}
    where the equality holds by the compensator identity, and the second inequality by $p_t^c \le 1$.
    On the other hand, on $A^\complement$, $M_t^c = \Hat{M}_t^c$ and $N_t^c = \Hat{N}_t^c$ for all $t \in [0,T]$. As the Brownian motion is common, $S_t = \Hat{S}_t$. Consequently, $\delta_t^c = \Hat{\delta}_t^c$, $X_t = \Hat{X}_t$, and $Q_t = \Hat{Q}_t$. Thus, $R_\pi^g = R_\pi^{\Hat{g}}$ on $A^\complement$. Define
    \begin{equation}
        \label{eq:Rstar}
        \Hat{R}^* \coloneqq \lvert x_0 \rvert + (S^* + \Bar{\delta}) C_{M,\Hat{M}} + Q^* S^* + (\kappa + \gamma T) (Q^*)^2,
    \end{equation}
    where $S^* \coloneqq \lvert S_0 \rvert + \sigma \sup_{t \in [0,T]} \lvert W_t \rvert + \max{(\epsilon^+, \epsilon^-)} C_{M,\Hat{M}}$,  $C_{M,\Hat{M}} = M_T^+ + M_T^- + \Hat{M}_T^+ + \Hat{M}_T^-$, and $Q^* \coloneqq \lvert q_0 \rvert + C_{M,\Hat{M}}$. Note that $\Hat{R}^*$ has finite moments since  $S^*$ has  finite  moments   (by Doob's maximal inequality, see \cite[Theorem A.10]{schilling2014brownian}) and $M_T^c$, $\Hat{M}_T^c$ have finite  moments   (see \cite{leblanc2024exponential}). Then $\lvert R_\pi^g \rvert \le \Hat{R}^*$ and $\lvert R_\pi^{\Hat{g}} \rvert \le \Hat{R}^*$ for any $\pi \in \Pi$. Thus, by the Cauchy-Schwarz inequality
    \begin{equation}
    \label{J_different_kernel}
    \lvert J^g(\pi) - J^{\Hat{g}}(\pi) \rvert \le 2 \|\Hat{R}^*\|_{L^2} (\mathbb{P}[A])^{1/2}.
    \end{equation}
    Similarly to \cite[Proposition A.1]{martin_tse_2026} and using $\mathbb{E}[\Hat{M}_T^c] \le T \Lambda_c / (1 - \lVert \Hat{g}^c \rVert_{L^1([0,T])})$, we can show that
    \begin{equation*}
        \mathbb{E}[D_M^c(T)] \le \frac{\Lambda_c T}{(1 - \lVert g^c \rVert_{L^1([0,T])}) (1 - \lVert \Hat{g}^c \rVert_{L^1([0,T])})} \lVert g^c - \Hat{g}^c \rVert_{L^1([0,T])}.
    \end{equation*}
  This with \eqref{J_different_kernel}
  and $ \lVert g^c - \Hat{g}^c \rVert_{L^1([0,T])}\le  \sqrt{T}\lVert g^c - \Hat{g}^c \rVert_{L^2([0,T])}$ yield the desired result.
\end{proof}

\begin{proof}[Proof of Theorem \ref{thm:disc_trunc_kernel_reward_bnd}]
   Since $ \lVert \Hat{g}^{\pm,K_{1}, K_{2}, p} \rVert_{L^1([0,T])} \le \rho^* < 1$,  $  \Hat{H}^{\pm}$ are finite,   the moments of the Hawkes processes associated with kernel $\Hat{g}^{\pm,K_{1}, K_{2}, p}$
   can be bounded uniformly in $\rho^*$, independent of $K_1,K_2$ and $p$  (see \cite{leblanc2024exponential}). Thus, by the definition  \eqref{eq:Rstar} of $R^{*}$  and Doob's maximal inequality, $ \|R^{*}\|_{L^2} < \infty$. Note  that $\sup_{\pi \in \Pi} J^g(\pi) - J^g(\hat \pi) \le 2 \sup_{\pi \in \Pi} \lvert J^g(\pi) - J^{\Hat{g}}(\pi) \rvert$. Combining the above with Proposition   \ref{prop:L2_approx} and Lemma \ref{lemma:PnL_gap}  yields the desired result.
\end{proof}

\section{Conclusion}
\label{sec:conclusion}

We proposed a model-free RL framework for market making under general Hawkes order-flow dynamics and quantified the regret induced by a learned Markovian approximation of the underlying non-Markovian dynamics. Guided by our theoretical analysis, we developed the Hawkes-Structured Network   to learn compact Hawkes-aware representations of historical order flow directly from reward feedback while simultaneously optimizing trading performance, without explicitly estimating excitation kernels or fill probabilities.  Extensive experiments demonstrate consistent improvements over general-purpose neural architectures across a wide range of Hawkes-driven market-making environments.


Several directions merit further investigation. First, in the multi-asset setting, the current HSN architecture models cross-asset interactions only implicitly through the MLP trunk, rather than explicitly encoding cross-channel Hawkes excitation. Introducing a $C \times C \times p$ cross-channel filter could directly embed the full multi-asset Hawkes structure into the feature extractor. 
More broadly, it remains an open question how task-oriented representation learning relates to the statistical recovery of underlying Hawkes processes.
Extending the proposed framework to richer market microstructure models and other non-Markovian stochastic control problems is also a promising direction for future research.

\vspace{-1mm}

\begin{acks}
Y.Z. is supported by a J.P.\ Morgan AI Faculty Research Award.
P.P. is supported by the Roth Scholarship (Imperial~College~London)
and the Excellence Scholarship (Gesellschaft f\"ur Forschungsf\"orderung
Nieder\"osterreich, a subsidiary of the Province of Lower Austria).
\end{acks}

\vspace{-1mm}

\bibliographystyle{ACM-Reference-Format}
\bibliography{references}


\end{document}